\renewenvironment{proof}[1][Sketch]{%
  \par\noindent\textbf{Proof}\ \textit{(#1): }%
}{\hfill$\square$\par}
\newtheorem{definition}{Definition}
\newtheorem{lemma}{Lemma}
\newtheorem{proposition}{Proposition}
\definecolor{lightgray}{gray}{0.92}
\title{Probabilistic Collision Risk Estimation through Gauss-Legendre Cubature and Non-Homogeneous Poisson Processes}
\author{Trent Weiss and Madhur Behl}
\date{September 2024}
\begin{document}
\setlength{\textfloatsep}{0.3cm}
\setlength{\floatsep}{0.3cm}
\maketitle
\begin{abstract}
    Overtaking in high-speed autonomous racing demands precise, real-time estimation of collision risk; particularly in wheel-to-wheel scenarios where safety margins are minimal. Existing methods for collision risk estimation either rely on simplified geometric approximations, like bounding circles, or perform Monte Carlo sampling which leads to overly conservative motion planning behavior at racing speeds.
    We introduce the Gauss–Legendre Rectangle (GLR) algorithm, a principled two-stage integration method that estimates collision risk by combining Gauss–Legendre with a non-homogeneous Poisson process over time. 
    GLR produces accurate risk estimates that account for vehicle geometry and trajectory uncertainty.
    In experiments across 446 overtaking scenarios in a high-fidelity Formula One racing simulation, GLR outperforms five state-of-the-art baselines achieving an average error reduction of $77\%$ and surpassing the next-best method by $52\%$, all while running at 1000 Hz. 
    The framework is general and applicable to broader motion planning contexts beyond autonomous racing.
\end{abstract}


\section{Introduction}
\label{sec:intro}
Autonomous racing competitions such as the Indy Autonomous Challenge~\cite{indy} and A2RL have established high-speed autonomous driving as an emerging research frontier \cite{betz2022autonomous}, where vehicles operate close to their dynamic limits.
Among the most challenging maneuvers in multi-agent autonomous racing is autonomously overtaking an opponent, which requires balancing tight track-bounds constraints, dynamic feasibility of the maneuver, and collision risk with other vehicles. 
Because a collision at racing speeds can be catastrophic for the ego and opponent vehicles, precisely estimating the probability of collision over the full planned maneuver is critical for any motion planner aiming to ensure safety.

This brings us to the core probabilistic collision risk estimation problem: given a planned trajectory for the ego vehicle and a predicted trajectory for the opponent, how do we determine the total probability of collision throughout the maneuver? 
In autonomous racing, this is challenging because the opponent’s future position is uncertain and no closed-form solution exists for collision risk across a high-speed trajectory. 
Most existing sampling-based collision risk estimation approaches focus on lower-speed or urban driving, where larger safety margins and more gradual interactions are typical~\cite{lefevre_survey,sciencedirect_survey}.
Vehicles are often approximated with minimal bounding circles - an assumption that tends to overestimate risk in autonomous racing, where margins are both small and significant. 
This can lead to overly cautious or aborted maneuvers, sacrificing overtaking opportunities and increasing lap times, or leading to constant re-planning.
Other approaches~\cite{stochastic_reachability_spaces,mutual_awareness} evaluate only instantaneous collision likelihood rather than the cumulative risk over time, limiting their applicability in high-speed settings.
We propose the Gauss–Legendre Rectangle (GLR) algorithm for probabilistic collision risk estimation.
Rather than simplifying vehicle geometry to a bounding circle, GLR retains the full rectangular shape and evaluates collision risk using a two-stage process:
it first computes the instantaneous collision probability at a fixed point in time using Gauss–Legendre cubature, and then integrates this risk over time using a non-homogeneous Poisson process to yield the total probability of collision across the maneuver.

Our key contributions are:
\begin{enumerate}
    \item A novel probabilistic collision-risk estimator (GLR) that uses spatial Gauss–Legendre cubature and temporal integration of a non-homogeneous Poisson process.
    \item A theoretical formulation that avoids conservative Boole-inequality upper bounds and does not rely on Monte Carlo sampling.
    \item A new dataset of 446 high-speed overtaking scenarios from a high-fidelity Formula One Deepracing simulator, along with a reproducible evaluation pipeline showing that GLR achieves a $77\%$ average error reduction over five five state-of-the-art methods while maintaining a 1000 Hz runtime.
\end{enumerate}

Although motivated by autonomous racing, the GLR framework is applicable to any motion planning context requiring precise risk estimation under uncertainty.


\section{Related Work}
\label{sec:related_work}
Collision risk estimation has been extensively studied, with comprehensive surveys provided in~\cite{lefevre_survey,sciencedirect_survey}.

\noindent \textbf{Risk-aware Motion Planning:} 
Many planners treat collision probability as a constraint in an optimization loop. 
Chance-constrained MPC~\cite{chance_constrained_blackmore} and its variants~\cite{cvar} bound the maximum pointwise collision probability along a trajectory. 
Sampling-based relaxations~\cite{janson2018monte} and reachable set approximations~\cite{dawson2023chance} have also been proposed.
Frey et al.~\cite{Frey-RSS-20} avoid sampling by analytically upper bounding the total risk via Boole’s inequality. 
While these methods work well in low-speed urban driving, or pedestrian avoidance, they tend to be overly conservative for high-speed racing where aggressive maneuvers and precise risk estimation are essential.

\noindent \textbf{Direct Collision Probability Estimation:} 
Other works focus directly on estimating the total collision probability over a trajectory. 
Otte et al.~\cite{ICRA2012_positiononly} and spacecraft conjunction analysis methods~\cite{satellite_spheres} define collision via overlapping bounding spheres.
This overestimates risk in wheel-to-wheel racing where rectangular vehicle footprints often do not collide even when their enclosing circles do.
Risk density methods~\cite{paiola2024riskdensity} and mutual awareness models~\cite{mutual_awareness} measure risk over short intervals or discrete instants but do not integrate over the full maneuver horizon.
Some approaches assume vehicles follow lane constraints~\cite{lanebased} or treat obstacles as static~\cite{swept_path}, assumptions that break down in dynamic overtaking.

\noindent \textbf{Bounding Approximations and Boole’s Inequality:} Finally, many methods (~\cite{discounted_blub,chance_constrained_blackmore,chanceconstrainted_arxiv,Frey-RSS-20,velocity_scaled_pf}) in collision risk estimation utilize Boole's Inequality to place an upper bound on collision probability. I.e. these techniques utilize the fact that the sum of individual collision probabilities is always greater than their mutual product resulting in an over-approximation of the collision risk. 
While appropriate for pedestrian scenarios, such conservative bounds impede overtaking success in racing (see Table~\ref{table:mainresults}).
We group methods using these simplifications under the term \textit{BIUB} (Boole's Inequality Upper Bound).

\noindent \textbf{Summary}
In contrast to prior approaches that rely on bounding-circle approximations or conservative analytical bounds (e.g., Boole’s inequality), our proposed method retains the full rectangular vehicle geometry and models collision risk over a continuous time horizon. 
By combining Gauss–Legendre cubature with a non-homogeneous Poisson process, GLR achieves accurate, parameter-free risk estimation with lower conservatism than sphere-based or BIUB methods.
(Sec.~\ref{sec:results}).
The next section formalizes the problem setting.


\section{Problem Formulation}
\label{sec:problem_formulation}
We consider the future motion of two vehicles:
\begin{enumerate}
    \item \textbf{Ego vehicle}: controlled by an autonomous agent
    \item \textbf{Target vehicle}: an opponent that the ego must overtake
\end{enumerate}

We denote the ego vehicle’s planned trajectory by
\begin{equation}
        \mathcal{T}_{ego}: [0,\, T_F] \,\to\, \mathbb{R}^2,
        \label{eqn:T_ego}
\end{equation}
where \(t = 0\) is the current time, \(t = T_F\) is a future prediction horizon, and \(\mathcal{T}_{ego}(t) = \mathbf{x}_{ego} \in \mathbb{R}^2.\), is the ego vehicle's cartesian position at time $t$.
This \(\mathcal{T}_{ego}\) is the single candidate trajectory produced by the ego’s motion planner; it is \emph{not} stochastic.


By contrast, the target vehicle’s future motion is uncertain, so we model it as a \emph{distribution of trajectories}, $p(\mathcal{T})$,
where each element of \(p(\mathcal{T})\) is a function \([0,\, T_F] \to \mathbb{R}^2.\) 
Figure~\ref{fig:problem_statement} illustrates this setup. The blue-shaded region indicates all possible target-vehicle positions under the distribution \(p(\mathcal{T}).\)  
At each fixed time \(t\), we assume the target vehicle's centroid distribution
\begin{equation}
    p_t(\mathbf{x}) \;=\; p\bigl(\mathbf{x} \,\big\vert\, p(\mathcal{T}),\, t\bigr)
\end{equation}
is known and has a closed-form PDF with mean \(\boldsymbol{\mu}_t\) and covariance \(\boldsymbol{\Sigma}_t\).
Many prediction models (e.g.,~\cite{mtr++,BARTe,densetnt}) can produce such time-indexed probability distributions.




\begin{figure}
   \centering
   \includegraphics[width=0.825\columnwidth]{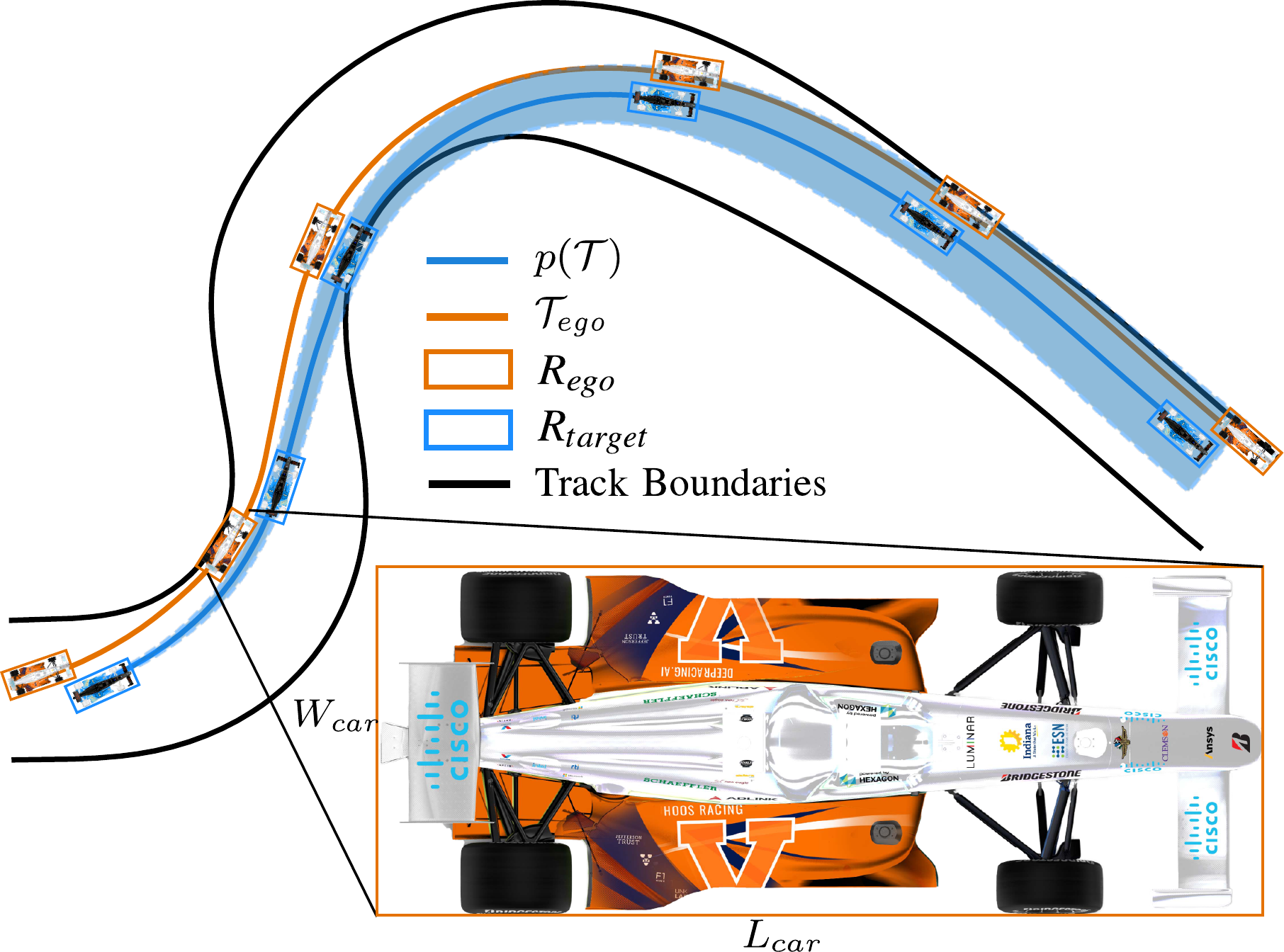}
   \caption{Problem setting: the ego vehicle’s deterministic trajectory 
   $\mathcal{T}_{ego}$ (orange) is evaluated for collisions against the target vehicle’s uncertain motion 
   $p(\mathcal{T})$ (blue). 
   }
   \label{fig:problem_statement}
\end{figure}



Our goal is to estimate the probability that following \(\mathcal{T}_{ego}\) over \([0,\,T_F]\) will lead to a collision with the target vehicle’s uncertain motion, with \emph{collision} meaning the bounding rectangle of the ego, \(R_{ego}\), intersects that of the target, \(R_{target}\), at any time \(t \in [0,\, T_F].\)  Fig.~\ref{fig:problem_statement} depicts these rectangles, each assumed oriented along its respective heading with width $W_{car}$ and length $L_{car}$.
Formally, let \(\mathbb{T}_{col}\) be the set of all trajectories in \(p(\mathcal{T})\) that collide with \(\mathcal{T}_{ego}\).  
We wish to compute
\begin{equation}
    \Pr(\text{Collision} \,\big\vert\, p(\mathcal{T}),\, \mathcal{T}_{ego})
    \;=\;
    \int_{\mathbb{T}_{col}} p(\mathcal{T}) \,d\mathcal{T}
    \label{eqn:problem_statement_as_math}
\end{equation}

Evaluating this integral directly is infeasible: \(\mathbb{T}_{col}\) spans a continuum of possible trajectories, and even enumerating them is intractable.  
Moreover, any high-dimensional integration over this set lacks a general closed-form solution.  
However, accurate collision-risk assessment is crucial in autonomous racing, where overtaking maneuvers at high speed demand careful modeling of opponent uncertainty and time-varying positions.


We next outline important background information for our approach to transforming this integral (equation \ref{eqn:problem_statement_as_math}) into a tractable problem by modeling collision events with a counting process.

\section{Preliminaries}
\label{sec:background}
\subsection{One-dimensional Gauss--Legendre Quadrature}
\label{subsec:oned_glq}

Gaussian Quadrature~\cite{Gauss_1815} is a method for approximating the integral of a function $f$ by evaluating $f$ at a fixed set of sample points, known as 
\emph{nodes}. 
In particular, it approximates the definite integral
\begin{equation}
    \int_{-1}^1 f(t) \, dt \;\approx\; \sum_{i=1}^n w_i \, f(\xi_i),
    \label{eqn:gauss_quadrature_minus1_to_1}
\end{equation}
where each weight--node pair $(w_i, \,\xi_i)$ is determined by a \emph{Gaussian Quadrature Rule}~\cite{golub_gq_rules}. 
The integer $n$, 
called the \emph{order} of the quadrature, defines the number of nodes.

In this work, we employ a rule called 
\emph{Gauss--Legendre Quadrature}~\cite{jacobi_gl_quadrature} (GLQ). 
An $n$-th order GLQ has nodes at the roots of the $n$-th Legendre polynomial $p_n(u)$:
\begin{equation}
    p_n(u) \;=\; \frac{K_n}{2^n \, n!} \,\frac{d^n}{du^n}\bigl(u^2 - 1\bigr)^n,
    \label{eqn:legendre_polynom_rodrigues_formula}
\end{equation}
where $K_n$ is a constant that normalizes $p_n$ so that $p_n(1) = 1.$ 
The $n$ roots 
$\{\xi_1, \xi_2, \dots, \xi_n\}$ form the GLQ nodes, and the corresponding weights $w_i$ are given by
\begin{equation}
    w_i \;=\; \frac{2}{\!\bigl(1-\xi_i^2\bigr)\,[\,p_n'( \,\xi_i )\,]^2}
    \label{eqn:glq_weights}
\end{equation}

To approximate an integral over a finite interval $[a,b]$, we apply a change of variable $t = \tfrac{b-a}{2}\,\xi + \tfrac{a+b}{2}\,$:
\begin{equation}
    \int_a^b f(t)\,dt 
    \;=\; \int_{-1}^1 f\!\Bigl(\tfrac{b-a}{2}\,\xi + \tfrac{a+b}{2}\Bigr)
    \,\frac{dt}{d\xi}\,d\xi
    \label{eqn:change_of_interval}
\end{equation}
Since $\tfrac{dt}{d\xi} = \tfrac{b-a}{2}$, each node $\xi_i$ maps to
\begin{equation}
    t_i \;=\; \tfrac{b-a}{2}\,\xi_i \;+\; \tfrac{a+b}{2},
    \label{eqn:gauss_quadrature_x_of_xi}
\end{equation}
And thus
\begin{equation}
    \int_a^b f(t)\,dt 
    \;\approx\; \tfrac{b-a}{2}\,\sum_{i=1}^n w_i \,f\bigl(t_i\bigr)
    \label{eqn:gauss_quadrature_a_b}
\end{equation}
This approximation will serve as a core building block for our collision risk estimation method.

\subsection{Two-dimensional Gauss--Legendre Quadrature}

GLQ naturally extends to two-dimensional integrals. 
Let $g : \mathbb{R}^2 \rightarrow \mathbb{R}$ be a scalar function, and consider 
the double integral of $g$ over a rectangular region 
$R = [x_1, x_2] \times [y_1, y_2]$:
\begin{equation}
    \int_R g(x,y) \,dA 
    \;=\; \int_{y_1}^{y_2}\!\!\int_{x_1}^{x_2} g(x,y)\,dx\,dy
    \label{eqn:glq_2d}
\end{equation}
First, we apply GLQ of order $m$ with respect to $x$, treating $y$ as a constant in each inner integral. Defining 
$\chi_i = \tfrac{x_2 - x_1}{2}\,\xi_i + \tfrac{x_1 + x_2}{2}$, we get
\begin{equation}
    \int_R g(x,y)\,dA 
    \;\approx\; \frac{x_2 - x_1}{2} \,\int_{y_1}^{y_2} \sum_{i=1}^m
    w_i\,g\!\bigl(\chi_i,\,y\bigr)\,dy
    \label{eqn:glc_x}
\end{equation}
Next, we apply GLQ again in $y$. Let 
$\psi_j = \tfrac{y_2 - y_1}{2}\,\xi_j + \tfrac{y_1 + y_2}{2}$ 
Then
\begin{equation}
    \int_R g(x,y)\,dA 
    \;\approx\; \frac{x_2 - x_1}{2}\,\frac{y_2 - y_1}{2}
    \,\sum_{j=1}^m \Biggl[\,w_j \sum_{i=1}^m w_i\, g(\chi_i,\,\psi_j)\Biggr]
    \label{eqn:glc_y}
\end{equation}
Because $w_j$ is constant with respect to the inner sum:
\begin{equation}
    \int_R g(x,y)\,dA 
    \;\approx\; \frac{\bigl(x_2 - x_1\bigr)\,\bigl(y_2 - y_1\bigr)}{4}
    \sum_{j=1}^m \sum_{i=1}^m w_i\,w_j\, g\bigl(\chi_i,\,\psi_j\bigr)
    \label{eqn:full_glc}
\end{equation}
This approach is sometimes called \emph{Gaussian Cubature}~\cite{cubature_formulae1,cubature_formulae2}; 
we call the two-dimensional case based on Legendre polynomials 
\emph{Gauss--Legendre Cubature (GLC)}. 

GLC plays a central role in our collision-risk estimation, providing a numerical approximation of collision events at fixed instants in time. 
In the next subsection, we describe how a counting-process formulation leverages these instantaneous collision estimates to compute an overall collision probability across time.

\subsection{Non-homogeneous Poisson Processes}
\label{subsec:nhpp}

We briefly recall the definition of a Non-Homogeneous Poisson Process (NHPP). 
\begin{definition}[NHPP]
\label{def:nhpp}
A counting process $N(t)$ is called a \emph{Non-Homogeneous Poisson Process} 
on $[0,\,T_F]$ if:
\begin{enumerate}
    \item $N(0)=0$,
    \item For all $t,\Delta t>0$, the increment 
    $N\bigl(t + \Delta t\bigr) - N(t)$ follows a Poisson distribution with 
    mean $\int_{t}^{t+\Delta t}\!\lambda(\tau)\,d\tau$, where $\lambda(t)$ is a 
    time-varying hazard function.
\end{enumerate}
\end{definition}

\begin{lemma}[Zero-event probability in an NHPP]
\label{lemma:nhpp_zero_events}
Let $N(t)$ be an NHPP with hazard function $\lambda(t)$ on $[0,T_F]$. Then 
the probability that $N(t) = 0$ for all $t\in[0,\,T_F]$ is 
\begin{equation}
    \Pr\!\bigl[N(T_F)=0\bigr] 
    \;=\; \exp\!\Bigl[-\!\int_{0}^{T_F}\!\lambda(t)\,dt\Bigr].
    \label{eqn:pr_NTF_eq_0}
\end{equation}
Hence, the probability of at least one event by time $T_F$ is
\begin{equation}
    \Pr\!\bigl[N(T_F)\ge 1\bigr] 
    \;=\; 1 \;-\; \exp\!\Bigl[-\!\int_{0}^{T_F}\!\lambda(t)\,dt\Bigr].
    \label{eqn:nhpp_PrNeq0}
\end{equation}
\end{lemma}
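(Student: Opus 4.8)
The plan is to derive the result directly from Definition~\ref{def:nhpp}, since the key observation is that the full count $N(T_F)$ is itself Poisson-distributed. First I would establish that the event ``$N(t)=0$ for all $t\in[0,T_F]$'' coincides with the single terminal event $\{N(T_F)=0\}$. This equivalence holds because a counting process is monotonically non-decreasing and starts at $N(0)=0$: if $N(T_F)=0$ then $N(t)=0$ for every earlier $t$, while conversely the ``for all $t$'' statement trivially includes $t=T_F$. Reducing the all-time condition to one endpoint condition is what makes a closed form attainable.

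Next I would apply the increment property of Definition~\ref{def:nhpp} over the entire horizon. Writing $N(T_F)=N(T_F)-N(0)$ and using $N(0)=0$, this increment is Poisson-distributed with mean $\Lambda = \int_{0}^{T_F}\lambda(\tau)\,d\tau$. Evaluating the Poisson probability mass function at $k=0$ then gives
\begin{equation}
    \Pr\!\bigl[N(T_F)=0\bigr] \;=\; \frac{\Lambda^{0}\,e^{-\Lambda}}{0!} \;=\; e^{-\Lambda} \;=\; \exp\!\Bigl[-\!\int_{0}^{T_F}\lambda(\tau)\,d\tau\Bigr],
\end{equation}
which is exactly~\eqref{eqn:pr_NTF_eq_0}. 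The second claim~\eqref{eqn:nhpp_PrNeq0} follows immediately by passing to the complementary event, $\Pr[N(T_F)\ge 1]=1-\Pr[N(T_F)=0]$.

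The substantive probabilistic content is therefore minimal; the only point needing care is the boundary of the increment condition, which Definition~\ref{def:nhpp} states for $t,\Delta t>0$ and thus does not literally cover the left endpoint $t=0$. To handle this I would either read property~(2) as holding for $t\ge 0$, or invoke a short limiting argument: apply the property on $[\epsilon,T_F]$ so that $N(T_F)-N(\epsilon)$ is Poisson with mean $\int_{\epsilon}^{T_F}\lambda(\tau)\,d\tau$, then let $\epsilon\to 0^{+}$, using $N(0)=0$ together with right-continuity of the counting process and continuity of the integral in its lower limit. I expect this endpoint technicality, rather than any deeper argument, to be the main thing to pin down, since everything else is a direct substitution into the Poisson mass function.
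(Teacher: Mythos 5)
Your proof is correct and is precisely the standard Poisson-process argument that the paper itself invokes without writing out (the paper's ``proof'' is just the remark that the results follow directly from Poisson-process theory, citing Ross). Your reduction of the all-time event $\{N(t)=0,\ \forall t\in[0,T_F]\}$ to the terminal event $\{N(T_F)=0\}$ via monotonicity, the evaluation of the Poisson mass function at $k=0$, and the remark about the $t=0$ endpoint of the increment property are exactly the details behind that citation, so there is no substantive difference in approach.
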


\noindent
These results follow directly from Poisson-process theory~\cite{ross1995stochastic}. 
In the collision-risk context, we interpret $N(t)$ as the cumulative number of collisions between the ego and target vehicles on $[0,\,T_F]$. By choosing an appropriate hazard function, $\lambda(t)$, tied to the \emph{instantaneous} collision probability, we can compute the overall collision risk via Lemma~\ref{lemma:nhpp_zero_events}, 
as detailed in Section~V.

\section{Gauss–Legendre Rectangle Method}
\label{sec:proposed_method}
\begin{figure*}
    \centering
    \includegraphics[width=0.8\textwidth]{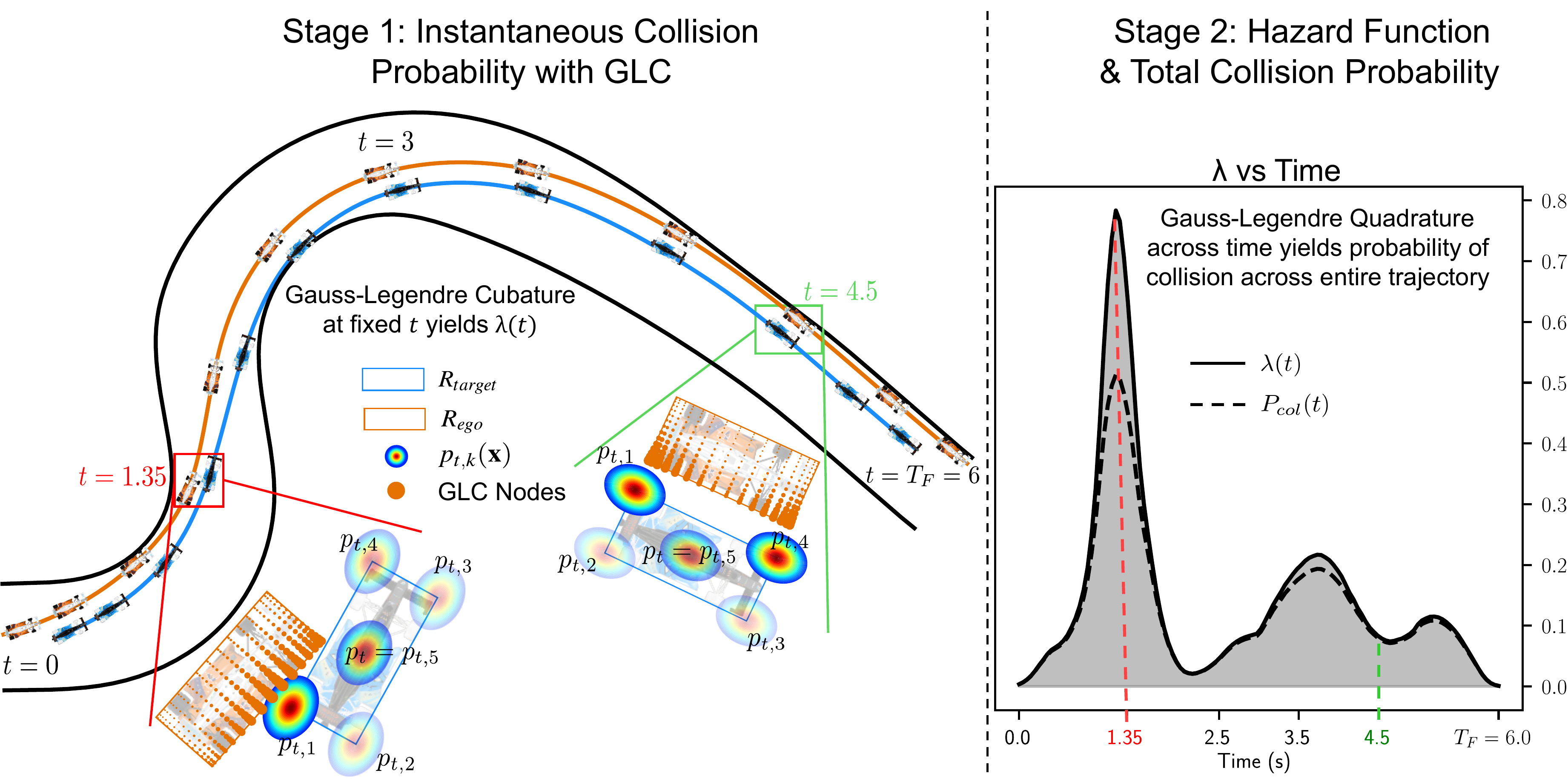}
    \caption{Two stages of the GLR algorithm. Left panel: Instantaneous collision probability $P_{col}(t)$ using GLC. Right panel: Corresponding hazard function $\lambda(t)$; $Pr[N(T_F)=0]$ readily follows from the grey area under the $\lambda(t)$ curve via equation \eqref{eqn:nhpp_PrNeq0}. This example has $T_F=6$ seconds, but the model is extensible to other prediction horizons.}
    \label{fig:pr_no_collision}
    \vspace{-8pt}
\end{figure*}

In this section, we present Gauss–Legendre Rectangle (GLR), our two-stage algorithm for estimating the overall probability of collision between an ego vehicle's planned trajectory, $\mathcal{T}_{ego}$, and an opponent’s stochastic trajectory, $p(\mathcal{T})$, over the prediction horizon $[0,\,T_F]$. 
Our method combines spatial integration using Gauss--Legendre Cubature (GLC) to compute an instantaneous collision probability, with a temporal integration using Gauss--Legendre Quadrature (GLQ) within a Non-Homogeneous Poisson Process (NHPP) framework. The use of an NHPP allows us to integrate the instantaneous collision probabilities over time without resorting to Monte Carlo sampling, significantly reducing computational complexity while accurately capturing the temporal dynamics of collision risk.

\subsection{Stage 1: Instantaneous Collision Probability with GLC}
\label{subsec:stage1_pcol}

At any fixed time $t \in [0,\,T_F]$, let $R_{ego}$ and $R_{target}$ denote the bounding rectangles of the ego and target vehicles (along the respective heading). 
We define $P_{\mathrm{col}}(t)$ as the probability that $R_{target}$ intersects $R_{ego}$. 
Rather than computing this probability directly, we first approximate the probability of no collision, $P_{\lnot \mathrm{col}}(t)$, and then set:
\begin{equation}
    P_{\mathrm{col}}(t) \;=\; 1 - P_{\lnot \mathrm{col}}(t).
    \label{eqn:prob_collision_aem}
\end{equation}

We approximate $R_{target}$ as five independent distributions - with means at the four corners and centroid of the target vehicle - and thus approximate:
\begin{equation}
    P_{\lnot \mathrm{col}}(t) \;\approx\; \prod_{k=1}^{5} \Bigl[\,1 - \!\int_{R_{ego}} p_{t,k}(\mathbf{x})\,dxdy \Bigr],
    \label{eqn:prob_no_collision_fixed_t}
\end{equation}
where $p_{t,k}(\mathbf{x})$ denotes the probability density function (PDF) for the $k^\text{th}$ corner ($k \in \{1,2,3,4\}$) or the centroid ($k=5$) of $R_{target}$ at time $t$. Each of these $p_{t,k}$ distributions come from the same time-indexed $p_t(\mathbf{x})$ and are generated as fixed offsets from $p_t(\mathbf{x})$ at the corners of $R_{target}$ under a rigid body assumption, with $R_{target}$ assumed to be oriented along the target vehicle's direction of travel at $t$.  I.e. each $p_{t,k}(\mathbf{x})$ shares the same covariance as $p_t(\mathbf{x})$ with it's mean moved to the corresponding corner of $R_{target}$. Note that this implies $p_{t,5}(\mathbf{x})=p_t(\mathbf{x})$.  An example of these offset distributions from a sample in our dataset is shown in the Stage 1 section of Figure \ref{fig:pr_no_collision} at $t=1.35$ and $t=4.5$. $R_{target}$ is depicted in blue (alongside each $p_{t,k}(t)$) and $R_{ego}$ in orange for each $t$.  

Each spatial integral in  \eqref{eqn:prob_no_collision_fixed_t} is computed via a Gauss--Legendre Cubature (GLC) of order $n_1$. Moreover, if $W_{\text{car}}$ and $L_{\text{car}}$ are the width and length of the vehicle, and $(w_i,w_j)$ are the corresponding cubature weights with nodes $(\chi_i,\psi_j)$ (cf. Equations~\eqref{eqn:glc_x}--\eqref{eqn:full_glc}), then
\begin{equation}
    \int_{R_{ego}} p_{t,k}(\mathbf{x})\,dxdy \;\approx\; \frac{W_{\text{car}}\,L_{\text{car}}}{4}
    \sum_{i=1}^{n_1}\!\sum_{j=1}^{n_1} w_i\,w_j \, p_{t,k}\!\bigl(\chi_i,\psi_j\bigr).
\end{equation}

Thus, $P_{\mathrm{col}}(t)$ can be computed at any fixed time $t$. We then utilize this technique of approximating $P_{col}$ to form the basis a Poisson process in Stage~2 of our method. We now describe this second stage.

\subsection{Stage 2: Hazard Function \& Total Collision Probability}
\label{subsec:stage2_hazard}

In our approach, the number of collision events $N(t)$ over $[0,\,T_F]$ is modeled as a Non-Homogeneous Poisson Process (NHPP) with a hazard function $\lambda(t)$. $\lambda(t)$ is derived from $P_{\mathrm{col}}(t)$ and must satisfy two boundary conditions:
\begin{enumerate}
    \item If $P_{\mathrm{col}}(t) \to 0$, then $\lambda(t) \to 0$.
    \item If $P_{\mathrm{col}}(t) \to 1$, then $\lambda(t) \to \infty$.
\end{enumerate}
Property 1 ensures that time intervals with zero instantaneous collision risk do not affect the total collision risk, while Property 2 ensures that guaranteed collision at any fixed time forces the total collision probability to 1. We define a hazard function with both of these properties.

\noindent \textbf{Hazard Function Construction:} 
In survival analysis~\cite{survival_analysis,ross1995stochastic}, the \textit{survival function} $S(t) = \Pr[\text{no collision up to } t]$ has derivative $f(t) = -\tfrac{dS}{dt}$, and the ideal hazard function $\lambda^*(t)$ is
\begin{equation}
\lambda^{\star}(t) = \frac{f(t)}{S(t)} = -\frac{d}{dt} \ln S(t). 
\label{eq:ideal_hazard}
\end{equation}



Where $S(t)$ is the fraction of trajectories in $p(\mathcal{T})$ that have not yet collided with the ego vehicle as of time $t$ and $1-S(t)$ the proportion of trajectories that \emph{have} collided with the ego vehicle as of time $t$.
Intuitively, the hazard function measures how likely a collision is to occur at time~$t$, assuming no collision has happened yet.

Computing $S(t)$ exactly is infeasible, since it requires the joint distribution over all future collision events.
We, therefore, approximate $S(t)$ using the instantaneous collision probability $P_{\text{col}}(t)$ from Eq.~\eqref{eqn:prob_collision_aem}. 
Under a small time-step $\Delta t$, we assume $S(t+\Delta t) \approx S(t)[1 - P_{\text{col}}(t)]$, which yields $-\ln S(t) \approx \tfrac{P_{\text{col}}(t)}{1 - P_{\text{col}}(t)}$ ~\cite{kleinbaum1996survival}.



This motivates the following hazard function, $\lambda(t)$, with similar \emph{boundary conditions} to $\lambda^*(t)$: 
\begin{equation}
    \lambda(t) \;=\; \frac{P_{\mathrm{col}}(t)}{1-P_{\mathrm{col}}(t)}
    \label{eqn:hazard}
\end{equation}
Note that our methodology is not constrained to this particular transformation from $P_{\mathrm{col}}(t) \rightarrow \lambda(t)$. 
In preliminary experiments, we explored using a blended hazard rate combining $P_{col}(t)$ and its log-odds transform. 
However, ablation consistently showed that the optimal blending parameter converged to zero, suggesting that the direct log-odds transform Eq.~\eqref{eqn:hazard} suffices.

The following proposition formalizes boundary conditions on $\lambda(t)$ that is shares with $\lambda^*$ and why they are an intuitive and logical basis for our chosen form of $\lambda$.

\begin{proposition}[Boundary Conditions of $\lambda(t)$]
\label{prop:hazard_boundary}
If $\lambda(t)$ is defined by \eqref{eqn:hazard}, then:
\begin{enumerate}
   \item $\lambda(t) \to 0$ as $P_{\mathrm{col}}(t) \to 0$, and 
   \item $\lambda(t) \to \infty$ as $P_{\mathrm{col}}(t) \to 1$.
\end{enumerate}
\end{proposition}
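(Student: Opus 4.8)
The plan is to prove the two boundary conditions directly from the definition
\eqref{eqn:hazard} by treating $\lambda$ as a function of the scalar quantity
$P_{\mathrm{col}}(t) \in [0,1)$ and evaluating the relevant limits. Writing
$p := P_{\mathrm{col}}(t)$ for brevity, the claim reduces to analyzing the
elementary map $p \mapsto \tfrac{p}{1-p}$ as $p \to 0^+$ and as $p \to 1^-$.
Since $P_{\mathrm{col}}(t)$ is a probability, it is confined to $[0,1]$, so these
are the only two extreme regimes we need to consider.

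For the first condition, I would observe that the numerator $p \to 0$ while the
denominator $1 - p \to 1$, so the quotient is continuous at $p = 0$ and the limit
is simply $\tfrac{0}{1} = 0$. This establishes $\lambda(t) \to 0$ and confirms
Property~1: intervals of vanishing instantaneous collision risk contribute
nothing to the hazard. For the second condition, as $p \to 1^-$ the numerator
approaches $1$ while the denominator $1 - p \to 0^+$ through positive values; the
ratio therefore diverges to $+\infty$. Formally, for any $M > 0$ one can exhibit a
threshold $p > 1 - \tfrac{1}{M+1}$ that forces $\tfrac{p}{1-p} > M$, which is the
$\varepsilon$--$\delta$ witness for the divergence. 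This yields
$\lambda(t) \to \infty$ and confirms Property~2.

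The only subtlety worth flagging — and the nearest thing to an obstacle — is the
one-sided nature of the second limit. Because $\tfrac{p}{1-p}$ is undefined at
$p = 1$ and changes sign if $p$ were allowed to exceed $1$, I would emphasize that
$P_{\mathrm{col}}(t)$ is a genuine probability and hence approaches $1$ only from
below, so the relevant limit is strictly $p \to 1^-$ with $1 - p > 0$. This keeps
the expression positive and guarantees divergence to $+\infty$ rather than an
ill-defined or negative value. No other machinery is required; the result is a
direct consequence of the continuity and monotonicity of the log-odds
transformation on $[0,1)$, so the proof is essentially a two-line limit
computation once these domain considerations are made explicit.
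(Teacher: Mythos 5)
Your proof is correct and takes essentially the same approach as the paper: a direct limit evaluation of the log-odds map $p \mapsto \tfrac{p}{1-p}$ at the endpoints $p \to 0^+$ and $p \to 1^-$. The paper's version is a one-sentence sketch (its reference to ``both terms'' appears to be a leftover from an earlier blended-hazard formulation), whereas yours spells out the one-sided limit and the divergence witness, but the underlying argument is identical.
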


\begin{proof}
When $P_{\mathrm{col}}(t)$ is near zero, both terms in \eqref{eqn:hazard} are negligible; when $P_{\mathrm{col}}(t)$ approaches 1, the second term diverges due to  $1-P_{\mathrm{col}}(t)$ tending to zero. 
\end{proof}
Thus, the total collision probability rises to 1 whenever a collision is certain at any instant $t$. Additionally, sections of the prediction horizon with very small $P_{\mathrm{col}}(t)$ do not contribute any additional total collision probability.\\
\textbf{Justification for choice of $\lambda(t)$}: 
While $P_{\mathrm{col}}(t)$ may exhibit temporal dependence, the log-odds form of $\lambda(t)$ provides a computationally efficient surrogate for an NHPP with independent increments. 
We show empirically in Section ~\ref{sec:results} that this approach to the hazard rate $\lambda$ is remarkably effective for racing and leave more complex formulations of $\lambda$, including those based on learned models, as future work. 

\noindent \textbf{Final Computation via GLQ:}

In accordance with Definition~\ref{def:nhpp} and Lemma~\ref{lemma:nhpp_zero_events} from Section~IV, the probability of no collisions is as follows:
\begin{equation}
    \Pr\!\bigl[N(T_F)=0\bigr] \;=\; \exp\!\Bigl[-\int_{0}^{T_F}\lambda(t)\,dt]\Bigr]
    \label{eqn:nhpp_PrNeq0}
\end{equation}
Thus, the overall collision probability is
\begin{equation}
    \Pr\!\bigl[\text{Collision}\bigr] \;=\; 1 - \exp\!\Bigl[-\int_{0}^{T_F}\lambda(t)\,dt\Bigr]
    \label{eqn:overall_no_collision_prob}
\end{equation}
We sample times $\{t_1,\dots,t_{n_2}\}$ and weights $\{w_1,\dots,w_{n_2}\}$, then approximate the time integral in \eqref{eqn:nhpp_PrNeq0} using Gauss--Legendre Quadrature (GLQ) of order $n_2$, with nodes $\{t_i\}$ and weights $\{w_i\}$:
\begin{equation}
    \int_{0}^{T_F}\lambda(t)\,dt \;\approx\; \frac{T_F}{2}\sum_{i=1}^{n_2} w_i\,\lambda\!\bigl(t_i\bigr)
\end{equation}
The overall collision risk then readily follows from \eqref{eqn:overall_no_collision_prob}. 
\begin{figure}
    \centering
    \includegraphics[width=0.825\columnwidth]{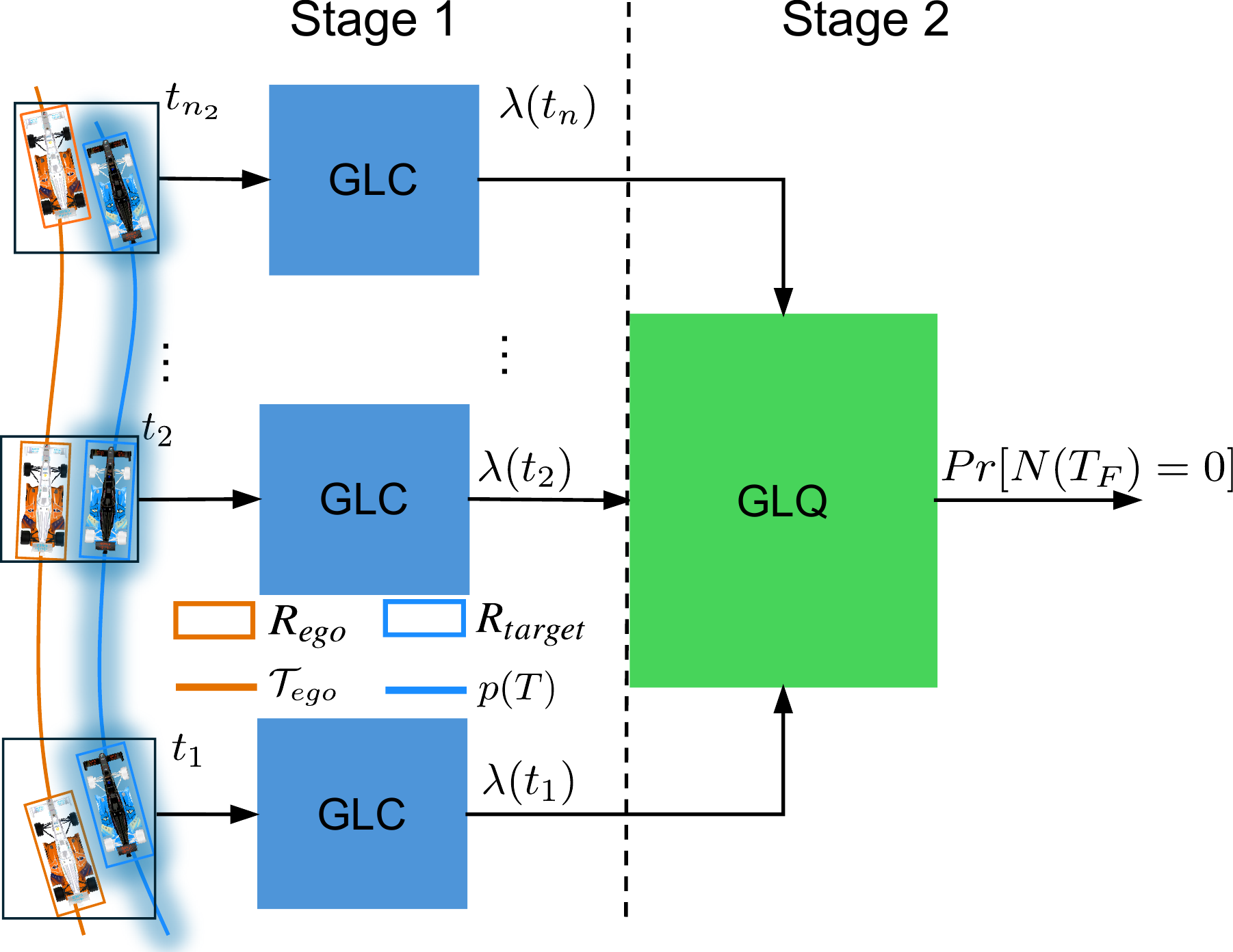}
    \caption{Flowchart of our method. Stage\,1 (GLC) computes $P_{\mathrm{col}}(t)$ and 
    $\lambda(t)$ at discrete times; Stage\,2 (GLQ) integrates $\lambda(t)$ to yield
    overall collision probability.}
    \label{fig:flowchart}
\end{figure}
\noindent \textbf{Error and Convergence:}
Increasing $n_{1}$ or $n_{2}$ improves the accuracy of the GLC and GLQ approximations, respectively. They can be used to balance computational speed and estimation fidelity.

\noindent \textbf{GLR Summary:}
GLR thus consists of two stages:
\begin{enumerate}
    \item \textbf{Stage 1 (GLC):} For each sampled time $t_i$, compute $P_{\mathrm{col}}(t_i)$ 
    using GLC (see Equations~\eqref{eqn:prob_collision_aem} --\eqref{eqn:prob_no_collision_fixed_t})
    and derive $\lambda(t_i)$ via \eqref{eqn:hazard}.
    \item \textbf{Stage 2 (GLQ):} Integrate $\lambda(t)$ over $[0,\,T_F]$ to obtain the overall collision probability using \eqref{eqn:overall_no_collision_prob}.
\end{enumerate}

Figures~\ref{fig:pr_no_collision} and~\ref{fig:flowchart} show our two-stage algorithm. Stage\,1 derives the instantaneous collision probability $P_{\mathrm{col}}(t)$, and by extension $\lambda(t)$, at any fixed time $t$, and Stage\,2 integrates this over $[0,\,T_F]$ via a Non-Homogeneous Poisson Process (NHPP), producing the overall collision risk.
Algorithm~\ref{algo1} provides the pseudocode for this procedure. This algorithm is very well suited for parallel computation. Each of the Stage 1 GLC computations at the Stage 2 GLQ nodes is independent of the others. Additionally, all evaluations of the various $p_k$ in \eqref{eqn:prob_no_collision_fixed_t} can be done simultaneously on a parallel platform.  

\begin{algorithm}
\caption{The overall GLR Algorithm}\label{alg:main}
\begin{algorithmic}[1]
\Require $\mathcal{T}_{ego}$, $p(\mathcal{T})$
\State$\boldsymbol{\chi} \gets \{\mathbf{x}_1, \mathbf{x}_2, ..., \mathbf{x}_{{n_1}^2}\}$ \Comment{Nodes for Stage 1 GLC}
\State$\mathbf{t}\gets\{t_1, t_2, ..., t_{n_2}\}$ \Comment{Nodes for Stage 2 GLQ}
\State$\boldsymbol{\Lambda} \gets [\text{ }]$ \Comment{List of $\lambda$ values at each $t_i$}
\State Stage 1: Compute $\lambda$ for each $t_i$ with GLC
\For{$i \in \{[1,2,...,n_2]\}$}
    \State $R_{ego} \gets Rect(\mathcal{T}_{ego}(t))$\Comment{Bounding rectangle}
    \vspace{2pt}
    \State $P_{col}(t_i) \gets GLC(p_{t_i}(\mathbf{x}), R_{ego}, \boldsymbol{\chi})$  \Comment{Stage 1 GLC}
    \vspace{2pt}
    \State$\boldsymbol{\Lambda}.append\Big(\frac{P_{col}(t_i)}{1-P_{col}(t_i)}\Big)$
\EndFor
    \State Stage 2: GLQ to estimate Pr[Collision]
    \State$Pr(\text{No Collision}) \gets \text{exp}[-\frac{T_F}{2}\sum_{k=1}^{n_2}w_k\Lambda_k]$
    \State$Pr(\text{Collision }\vert\text{ }p(\mathcal{T}), \mathcal{T}_{ego}) \gets 1 - Pr(\text{No Collision})$

\end{algorithmic}
\label{algo1}

\end{algorithm}

\section{Experimental Results}
\label{sec:results}
\subsection{Dataset Collection and Description}
\label{subsec:dataset}

Because no publicly available dataset exists specifically for autonomous racing overtaking scenarios with complete trajectory data, we curated a custom dataset using the DeepRacing simulation environment~\cite{deepracing_date,weiss2020deepracing}. 
Using an automated extraction script, we processed raw F1 video game telemetry to identify overtaking maneuvers where one vehicle gained a race position (e.g., moving from third to second place) at the expense of another (e.g., dropping from second to third).
Such a change in position is depicted in Figure~\ref{fig:example}.
This procedure yielded 446 distinct overtaking scenarios collected across five diverse racing tracks — Australia, Jeddah, Silverstone, Monza, and Bahrain - each capturing a high-speed interaction between an ego vehicle and a defending target vehicle.
Each scenario includes the positions, velocities, and timestamps of both the Ego and Target up to a 6.0s prediction horizon, sampled at 100Hz.  
This horizon was chosen to capture the full duration of most real-world overtaking maneuvers in autonomous racing. 
While simulation-based, this dataset reflects realistic racing behavior and track geometries, with scenarios spanning a range of approach speeds, lateral offsets, and opponent trajectories. It provides a valuable testbed for evaluating both the accuracy and conservatism of risk estimation algorithms.
To preserve double-blind review, we omit the GitHub repository link; however, all code and data will be released upon publication to support reproducibility and benchmarking.


\subsection{Experimental Setup}
\label{subsec:numerical_setup}

\begin{figure}[b]
    \includegraphics[width=0.925\columnwidth]{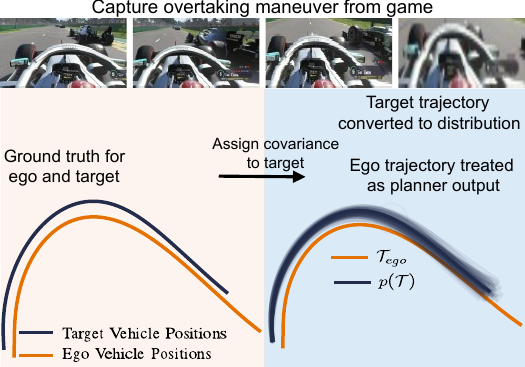}
    \caption{
    An example overtaking scenario from our F1 dataset.  $\mathcal{T}_{ego}$ is shown with some samples from $p(\mathcal{T})$
    }
    \label{fig:example}
\end{figure}
We interpret the Ego's recorded trajectory $\mathcal{T}_{ego}$, as the motion planner's output and fit it, via least squares, with a seventh  order B\'ezier curve in $\mathbb{R}^2$. 
This is a reasonable proxy, as any deterministic 
planner output can be approximated by such a spline.  
However, for the target vehicle, we require a distribution of possible trajectories $p(\mathcal{T})$, rather than a single deterministic one. 
Accordingly, we use probabilistic B\'ezier curves (PBC)~\cite{pbc, dbf} which treat each B\'ezier control point as a normal distribution, and inherently captures increasing uncertainty over time. 
This is implemented by assigning the recorded trajectory of the target vehicle as the mean of the PBC and setting covariance that minimize the the average KL divergence between $p_t(\mathbf{x})$ and a target Gaussian whose covariance expands linearly from $0.1^2\mathbf{I}$ to $\mathbf{I}$. 
Notably, our method does not depend on this modeling choice; any suitable target-vehicle
distribution $p(\mathcal{T})$ from existing prediction frameworks~\cite{mtr++,BARTe,densetnt} can be used.  A probabilistic model of vehicle dynamics could also be used to generate $p(\mathcal{T})$, which may lead to non-Gaussian $p_t(\mathbf{x})$ pdfs for long prediction horizons. The specifics of what prediction methodology produces $p(\mathcal{T})$ is out-of-scope for this work and we use this PBC model as a means of evaluating GLR's ability to assess collision risk.
Figure~\ref{fig:example} illustrates this final step: each recorded target trajectory is fitted by a PBC to form $p(\mathcal{T})$, providing the distribution necessary for our probabilistic collision risk estimation.



\noindent \textbf{Ground-Truth Collision Probability:}
We generate ground-truth collision probabilities via Dense Monte Carlo sampling: 2000 trajectories are drawn from $p(\mathcal{T})$ and each is checked for collision at 128 uniformly spaced times over the 6.0s horizon. 
The fraction of sampled trajectories that intersect the ego path at any time defines the collision probability. 
Although not suitable in real-time, this dense 
simulation provides a reliable ground truth for evaluating the collision-risk estimates.


\subsection{Evaluation Metrics}
\label{subsec:metrics}

We evaluated collision risk estimation performance using two primary metrics: mean absolute error (MAE) and computation time.
\begin{enumerate}
    \item Mean Absolute Error (MAE): The average absolute difference, across all overtaking scenarios, between estimated probability of collision and that of the Dense Monte Carlo ground-truth
    \item Computation Time (ms) and Loop Rate (Hz): Capture the time for a method to run end-to-end for each overtaking scenario. All techniques were measured on an NVIDIA Quadro RTX 4000 GPU. 
\end{enumerate}
MAE serves as the primary accuracy measure, with lower values indicating better performance. 
Computation time serves as a measure of feasibility, with techniques running slower than $\sim 500 Hz$ generally too slow for racing.
Together, these metrics reflect the tradeoff between accuracy and speed that is critical for real-time autonomous decision-making.
These metrics are consistent with those used in prior work~\cite{discounted_blub}, enabling direct comparison.

\begin{table}[!htbp]
    \centering
    \resizebox{\columnwidth}{!}{%
        \begin{tabular}{|l|c|l|c|}
            \hline
            \textbf{GLR Hyper-Parameter} & \textbf{Value} & \textbf{GLR Hyper-Parameter} & \textbf{Value} \\ \hline
            Stage 1 GLC Order ($n_1$) & 12   & Stage 2 GLQ Order ($n_2$) & 24 \\ \hline
              Prediction Horizon ($T_F$) & 6.0~$\si{\second}$ & Car Length ($L_{car}$) & 5.2~$\si{\meter}$ \\ \hline
            Car Width ($W_{car}$)       & 2.0~$\si{\meter}$ && \\ \hline
        \end{tabular}%
    }
    \caption{GLR Hyperparameters for our experiments.}
    \label{table:hyperparameters}
\end{table}
\subsection{Comparisons to Other Methods}
\label{subsec:benchmarks}

We compare GLR against several representative baselines covering both geometric approximations and temporal risk aggregation methods~\cite{velocity_scaled_pf,risk_density,assumeindependence_quadrotors,discounted_blub,max_circle}.




\textbf{Max-Circle:} A simple geometric baseline that models both vehicles as circles with fixed radius and computes instantaneous overlap probability~\cite{max_circle,satellite_spheres}. We report the maximum risk over time.

\textbf{Risk Density:} A recent approach~\cite{paiola2024riskdensity} that builds a cost field proportional to instantaneous collision likelihood and the radius of the vehicle's bounding sphere.

\textbf{Discounted BIUB:} A variant of the BIUB class of methods~\cite{discounted_blub}, that uses a temporal discount factor to weigh earlier collision probabilities more heavily.


\textbf{Velocity-Scaled Particle Filter (VS-PF):} A BIUB method~\cite{velocity_scaled_pf} that represents the opponent as particles and estimates $P_{\mathrm{col}}(t)$ by summing the mass of colliding particles; total risk is computed as a sum over time.


\textbf{Mutual Independence (MI):} This approach~\cite{assumeindependence_quadrotors} assumes per-timestep collision events are independent and estimates cumulative risk as $1 - \prod_{k} [1 - P_{\mathrm{col}}(t_k)]$, a conservative bound intended for pedestrian safety but overly cautious in racing.





\noindent \textbf{Quasi-Monte Carlo Gauss Legendre, QMLGL}:
An ablation of our method that replaces the full 2D Gauss–Legendre Cubature evaluation (used in Stage 1 of our method) with dense Monte Carlo sampling performed only at the Gauss–Legendre Quadrature nodes used in Stage 2. 
In this ablation, 2000 trajectories are sampled and evaluated at the GLQ nodes to obtain ground-truth $P_{col}(t)$ values. 
Although this baseline increases computation time, it provides an accurate reference for evaluating our collision-risk estimates.

The GLR algorithm and all baselines were implemented in PyTorch, leveraging GPU acceleration for improved performance.
All methods were evaluated on the same NVIDIA Quadro RTX 4000 GPU. The standard deviation of runtime across 446 trials was negligible for all methods (on the order of $<0.1$ ms). 
Table~\ref{table:hyperparameters} lists the hyperparameters used for GLR. $W_{car}$ and $L_{car}$ were taken as width and length of a typical Formula One vehicle. $T_F$ was chosen to match the duration of an overtaking maneuver. The GLQ/GLC orders, $n_1$ and $n_2$, were chosen as the largest values that still allow a 1000Hz loop rate.  
\subsection{Empirical Results}
\label{subsec:actual_results}
We evaluated our GLR algorithm alongside the previously stated comparison methods on all 446 overtaking scenarios. 
Table~\ref{table:mainresults} summarizes the performance in terms of MAE, computation time, and loop rate. The standard deviation ($\sigma$) values are also reported for MAE. 
The Mutual Independence method has the fastest loop rate. However, its high MAE of 0.450 indicates poor accuracy-which can lead to missed overtaking opportunities in high-speed autonomous racing.
\begin{table}
    \centering
    \renewcommand{\arraystretch}{1.2}
    \resizebox{\columnwidth}{!}{%
    \begin{tabular}{|l|c|c|c|c|}
        \hline
        \rowcolor{lightgray} Method & MAE $\pm\text{ }\sigma$ & \makecell{Computation \\ Time (ms)} & \makecell{Loop \\ Rate (Hz)}  \\ \hline
        GLR (Ours) & .065 $\pm$ .11 & 1 & 1000  \\ \hline
        QMLGL (Ablation of ours) & .058 $\pm$ 0.09& 8 & 125  \\ \hline
        VS-PF ~\cite{velocity_scaled_pf}  & .135 $\pm$ .15 & 1.2 & 833  \\ \hline
        Risk Density ~\cite{risk_density} & .285 $\pm$ .22& .5 & 2000  \\ \hline
        Discounted BIUB ~\cite{discounted_blub} & 0.398 $\pm$ .25& .5 & 2000  \\ \hline
        Mutual Independence ~\cite{assumeindependence_quadrotors} & 0.450 $\pm$ .26& .4 & 2500  \\ \hline
        Max Circle ~\cite{max_circle,satellite_spheres} & 0.477 $\pm$ .26& 0.6 & 1666.6 \\ \hline
        Dense Monte Carlo Ground Truth & N/A & 55.2 & 18.176  \\ \hline
    \end{tabular}
    }
    \caption{Experimental results on our overtaking scenarios dataset. GLR strikes the best balance between accuracy (MAE) and computation time.}
    \label{table:mainresults}
\end{table}

Our proposed GLR algorithm achieves a MAE of $0.065$. Among other methods, the next best method (VS-PF\cite{velocity_scaled_pf}) reports MAE of 0.135. GLR shows a $52.6\%$ improvement against this method while operating at a 1000Hz loop rate. This loop rate makes GLR suitable for integration into realtime racing planners, which operate as fast as $50$Hz~\cite{euroracing}. 
While QMLGL attains slightly lower MAE (0.058), it is an ablation of GLR itself, not a distinct baseline. QMLGL's modest accuracy improvement also comes at a significant computational cost, with GLR operating at eight times the speed of QMLGL (1000 Hz vs. 125 Hz).

These results demonstrate that our NHPP formulation effectively balances prediction accuracy with computational efficiency-yielding reliable collision risk estimates suitable for the stringent demands of autonomous racing.

\textbf{Multi-agent extension:} GLR naturally extends to multi-agent scenarios by summing individual $\lambda(t)$ terms for each opponent, assuming conditional independence. 
This additive formulation enables efficient multi-opponent risk estimation without the exponential complexity of full joint modeling. 
Extending GLR to capture opponent interactions and joint distributions is an important direction for future work.

\section{Conclusion}
\label{sec:conclusion}



This paper presented the Gauss–Legendre Rectangle (GLR) algorithm for estimating probabilistic collision risk, with a focus on high-speed autonomous racing and overtaking maneuvers. 
GLR integrates a Gauss–Legendre cubature method for spatial evaluation with a non-homogeneous Poisson process over time, enabling accurate and efficient risk estimation while maintaining a 1000Hz loop rate.
Compared to several existing baselines, GLR achieves an average improvement of $77\%$ in estimation accuracy. 
Although developed in the context of high-speed autonomous racing, the framework is general and can support a range of motion planning systems requiring continuous risk assessment.
While we employ a survival-analysis-inspired hazard function in this work, the underlying formulation is agnostic to the specific risk model. 
Future work will explore extensions to multi-agent interaction settings and investigate non-parametric hazard models to further enhance fidelity under uncertainty.
Our ongoing work focuses on integrating GLR with a closed-loop motion planner and prediction module, and evaluating its real-time performance in both simulation and real-world autonomous racing scenarios.


\bibliographystyle{unsrt}
\bibliography{references}

\end{document}